%% file: main.tex
\documentclass{article}

\usepackage{microtype}
\usepackage{graphicx}
\usepackage{enumitem}
\usepackage{subcaption}
\usepackage{caption}
\usepackage{marvosym}
\usepackage{booktabs}
\usepackage{multirow}
\usepackage[table]{xcolor}
\usepackage{makecell}
\usepackage{hyperref}
\usepackage{lipsum}
\usepackage{amsmath}
\usepackage{amssymb}
\usepackage{mathtools}
\usepackage{amsthm}
\usepackage[capitalize,noabbrev]{cleveref}
\usepackage[textsize=tiny]{todonotes}
\usepackage[accepted]{icml2026}

\newcommand{\ourmethod}{AAD-1}

\theoremstyle{plain}
\newtheorem{theorem}{Theorem}[section]
\newtheorem{proposition}[theorem]{Proposition}

\theoremstyle{definition}

\theoremstyle{remark}

\icmltitlerunning{AAD-1: Asymmetric Adversarial Distillation for One-Step Autoregressive Video Generation}

\begin{document}

\twocolumn[
  \icmltitle{AAD-1: Asymmetric Adversarial Distillation for \\One-Step Autoregressive Video Generation}

  \icmlsetsymbol{equal}{*}
  \icmlsetsymbol{corr}{\raisebox{-0.1ex}{\scriptsize\Letter}}

  \begin{icmlauthorlist}
    \icmlauthor{Haobo Li}{sjtu,comp}
    \icmlauthor{Yanhong Zeng}{comp,thu,corr}
    \icmlauthor{Yunhong Lu}{zju,comp}
    \icmlauthor{Jiapeng Zhu}{comp}
    \icmlauthor{Hao Ouyang}{comp}
    \icmlauthor{Qiuyu Wang}{comp}
    \icmlauthor{Ka Leong Cheng}{comp}
    \icmlauthor{Yujun Shen}{comp}
    \icmlauthor{Zhipeng Zhang}{sjtu,anyverse,corr}
  \end{icmlauthorlist}

  \icmlaffiliation{sjtu}{AutoLab, SAI, SJTU}
  \icmlaffiliation{comp}{Ant Group}
  \icmlaffiliation{zju}{Zhejiang University}
  \icmlaffiliation{thu}{Department of Automation, Tsinghua University}
  \icmlaffiliation{anyverse}{Anyverse Dynamics}

  \icmlcorrespondingauthor{Zhipeng Zhang}{zhipeng.zhang.cv@outlook.com}
  \icmlcorrespondingauthor{Yanhong Zeng}{zengyh1900@gmail.com}

  \begin{center}
    \url{https://aad-1.github.io/}
  \end{center}

  \icmlkeywords{Machine Learning, ICML}
  \vskip 0.3in

  \begin{center}
    {\includegraphics[width=\textwidth]{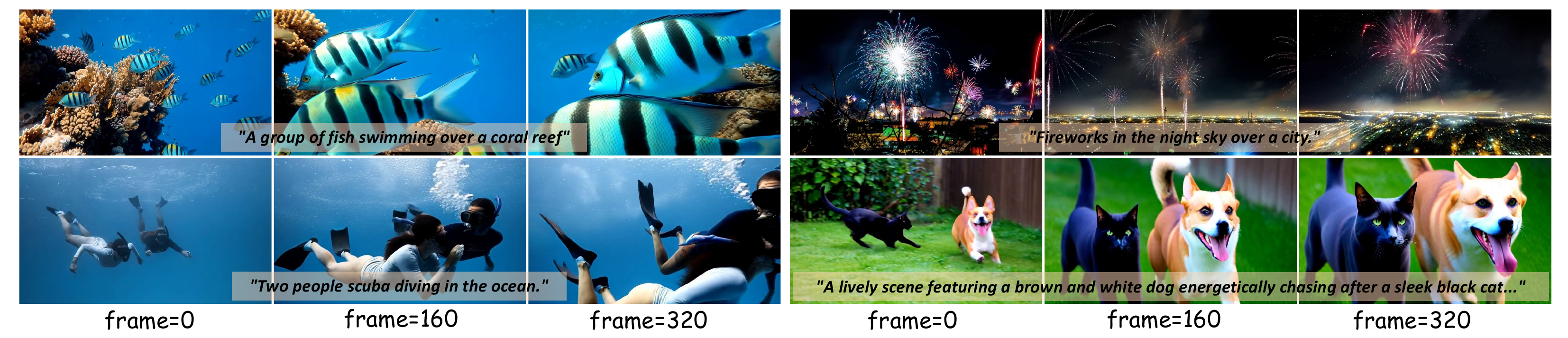}}
    \captionof{figure}{We propose \textbf{\ourmethod}, an \textbf{A}symmetric \textbf{A}dversarial \textbf{D}istillation framework for \textbf{One-step} autoregressive video generation. Given a single conditioning image, \ourmethod\ generates videos autoregressively while maintaining both high visual quality and motion fidelity over long horizons, requiring only one sampling step per chunk.}
    \label{fig:teaser}
  \end{center}

  \vskip 0.2in
]

\printAffiliationsAndNotice{}

\input{sections/abstract}
\input{sections/introduction}
\input{sections/related_work}
\input{sections/preliminaries}
\input{sections/method}
\input{sections/experiments}
\input{sections/conclusion}

\section*{Limitations}
Despite its strong chunk-wise one-step autoregressive generation, our method has limitations in fast motion, complex structures, and long-horizon extrapolation.
\paragraph{Fast motion.}
The one-step setting can struggle in fast-moving scenes, where large inter-frame motion must be predicted by a single denoising pass rather than refined across multiple sampling steps. In such cases, we observe blurry frames, distorted structures, or degraded temporal coherence, reflecting the difficulty of compressing iterative diffusion sampling into very few steps~\cite{yin2024one,lin2025diffusion}. Improving one-step objectives for large motion remains important for robust streaming generation.
\paragraph{Complex structures.}
Compared with APT2-style one-step-per-image generation~\cite{lin2025autoregressive}, where each step can focus on local synthesis for a single image, our chunk-wise one-step setting requires the generator to synthesize multiple latent frames within a chunk in a single forward pass. This makes preserving fine-grained details and subtle local dynamics more challenging, especially for complex and highly structured content such as human faces and hands. These challenges suggest a need for training objectives and generation strategies that better capture complex local structure under chunk-wise one-step generation.
\paragraph{Long-horizon extrapolation.}
Our adversarial refinement is trained on 5-second clips due to data and compute constraints, as high-quality long-video training data remains scarce and expensive to curate. Although the model can extrapolate beyond this horizon, long rollouts may exhibit drift and quality degradation as errors accumulate over autoregressive chunks, consistent with long-horizon autoregressive video generation challenges~\cite{lin2025autoregressive}. We hypothesize that longer-video adversarial training could alleviate this issue by exposing the generator to long-range temporal failures and accumulated rollout errors.

\section*{Acknowledgements}
This work was supported in part by the Natural Science Foundation of China under Grant No. 62503323, the Ant Group Research Intern Program, and the Ant Group Postdoctoral Programme.

\section*{Impact Statement}

This paper presents work whose goal is to advance the field of Machine Learning, specifically in efficient video generation. While our method enables faster autoregressive video synthesis, we acknowledge potential dual-use concerns common to generative models, including the creation of misleading or harmful content. We encourage the development of detection mechanisms and responsible deployment practices alongside this technology. There are many potential societal consequences of our work, none which we feel must be specifically highlighted here beyond these standard considerations for generative AI systems.

\bibliography{references}
\bibliographystyle{icml2026}

\newpage
\appendix
\onecolumn

\input{sections/appendix}

\end{document}

%% file: sections/abstract.tex
\begin{abstract}
We present \textbf{AAD-1}, an \textbf{A}symmetric \textbf{A}dversarial \textbf{D}istillation framework for \textbf{O}ne-step autoregressive image-to-video generation. State-of-the-art methods adopt adversarial distillation but suffer from motion collapse and training instability, resulting in static videos. AAD-1 addresses these challenges through two key designs in architecture and training strategy. Our key architectural insight is to break the symmetry between generator and discriminator. While the generator remains causal to preserve autoregressive sampling capability, the discriminator attends bidirectionally over the full spatiotemporal context and produces a single holistic realism score for the entire video sequence. This asymmetric design enables the discriminator to effectively detect global temporal failures and long-range drift that cause motion collapse in autoregressive generation. To stabilize training, we introduce a phased strategy that first uses distribution matching to bootstrap a stable one-step generator, providing a warm-up phase that brings the student distribution closer to the teacher before adversarial distillation begins. Extensive experiments on VBench demonstrate that AAD-1 achieves state-of-the-art performance in one-step autoregressive video generation.
\end{abstract}

%% file: sections/introduction.tex
\section{Introduction}

Fast autoregressive video diffusion post-training has emerged as a promising paradigm that adapts pretrained bidirectional video diffusion models \cite{wan2025wan,kong2024hunyuanvideo,lin2024open}, which are limited to generating fixed-length short clips, into few-step autoregressive models that support indefinitely long video generation \cite{teng2025magi, yuan2025lumos}. This paradigm has attracted significant research interest due to its value for real-time streaming applications (e.g., gaming) and world modeling \cite{videoworldsimulators2024, genie3,feng2024matrix}.

Training fast autoregressive video diffusion models presents substantial challenges~\cite{xing2024live2diff, yin2025slow}. Recent state-of-the-art methods integrate self-rollout training, where models learn from their own generated trajectories~\cite{huang2025self} rather than ground-truth contexts, overcoming the exposure bias in Teacher Forcing \cite{ho2022video} or Diffusion Forcing \cite{chen2024diffusion}. However, self-rollout training requires performing causal adaptation and step distillation simultaneously, imposing the burden of learning both autoregressive dynamics and accelerated sampling concurrently. This coupled optimization proves particularly challenging, with existing approaches requiring four or more sampling steps to maintain acceptable quality.

In this work, we target the extremely challenging one-step autoregressive image-to-video generation. While adversarial distillation is a leading approach for one-step distillation~\cite{lin2025diffusion}, two critical challenges limit current methods. \textbf{(1) Architectural limitation.} Existing methods adopt symmetric discriminator architectures that mirror the generator's causal structure with frame-wise discrimination, as shown in \cref{fig:arch_comp}-(a) \cite{lin2025autoregressive}. However, a causal discriminator evaluating frame $t$ can only attend to contexts up to block $t-1$ without future information, causing inherent insensitivity to accumulated temporal degradation. While individual frames appear realistic when conditioned on preceding frames, the overall sequence gradually loses motion fidelity, leading to motion collapse where videos become stuck at the initial frame \cite{lin2025autoregressive}. Aggregating all tokens for a video-level logit (\cref{fig:arch_comp}-(b)) offers partial improvement, yet causal attention fundamentally limits capturing long-range dependencies. \textbf{(2) Training instability.} When training from scratch, early one-step predictions lie far from the data distribution, and under self-rollout training, this gap compounds across time, destabilizing training dynamics \cite{cheng2025phased}.

To address these challenges, we propose \textbf{AAD-1}, an \textbf{A}symmetric \textbf{A}dversarial \textbf{D}istillation framework for \textbf{One}-Step autoregressive video generation with two key innovations in architecture and training.
\textbf{(1) Bidirectional discriminator with holistic discrimination.} To overcome the architectural limitation, we employ a bidirectional discriminator with video-level holistic discrimination. While the generator remains causal to preserve autoregressive sampling, as shown in \cref{fig:arch_comp}-(c), the discriminator attends bidirectionally over the full spatiotemporal volume and produces a single realism score for the entire sequence. This asymmetric design provides two critical advantages: (a) the discriminator can detect global temporal failures such as motion collapse that manifest gradually across the sequence, and (b) it can penalize long-range drift by comparing any frame against both past and future context. Our extensive ablations demonstrate that both components are essential, removing either bidirectional attention or video-level scoring substantially degrades motion quality, with causal or frame-wise variants reverting to motion collapse behaviors.
\textbf{(2) Phased training with distribution matching warm-up.} To stabilize adversarial distillation, we introduce a warm-up stage that leverages frame-wise distribution matching. Specifically, we use DMD to bootstrap a stable one-step generator that produces on-manifold predictions, establishing a foundation for subsequent adversarial refinement. This warm-up phase provides the adversarial stage with initial predictions sufficiently close to real data that the discriminator can provide meaningful gradients, preventing the training instability observed when optimizing from scratch.

We conduct extensive experiments on VBench, demonstrating that \ourmethod\ achieves state-of-the-art performance in one-step autoregressive video generation with superior visual quality and motion fidelity. Our contributions are:
\begin{itemize}[nosep]
\item We identify critical architectural and training limitations in existing one-step autoregressive video generation that lead to motion collapse and training instability.
\item We propose an asymmetric adversarial distillation framework featuring a bidirectional discriminator with video-level holistic discrimination and a phased training strategy with distribution matching warm-up.
\item We achieve state-of-the-art one-step autoregressive video generation on VBench.
\end{itemize}

\begin{figure}[t]
	\centering
	\includegraphics[width=0.95\linewidth]{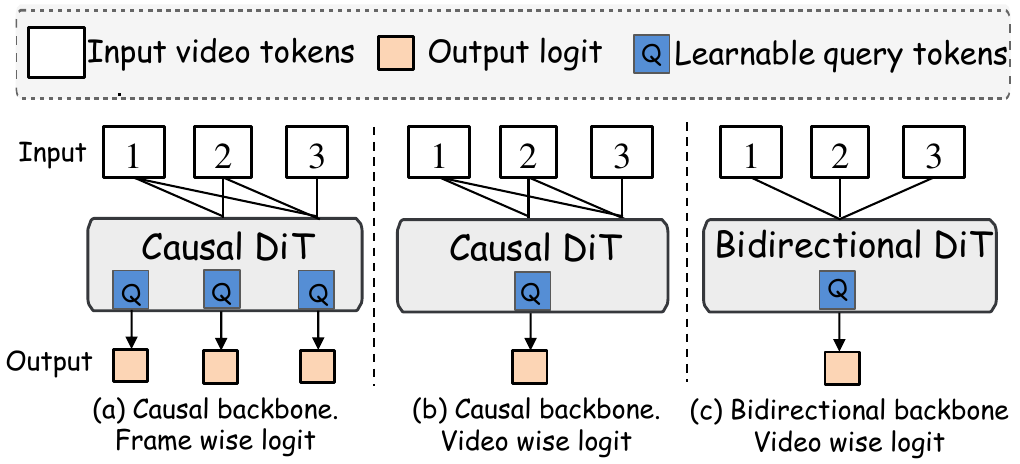}
	\caption{
    \textbf{Discriminator Architecture Comparison.} We compare three configurations: (a) \textbf{Causal backbone with frame-wise logits}, providing dense local feedback but lacking global temporal context; (b) \textbf{Causal backbone with video-level logit}, aggregating information causally but still constrained by unidirectional attention; and (c) \textbf{Bidirectional backbone with video-level logit (\ourmethod)}, which attends to the full spatiotemporal context. The bidirectional attention in (c) enables holistic discrimination that can detect gradual motion degradation and long-range drift across the entire sequence, which causal architectures are hard to capture.}
	\label{fig:arch_comp}
\end{figure}

%% file: sections/related_work.tex
\section{Related Work}

\paragraph{Autoregressive video diffusion models.}
Autoregressive video diffusion models generate video sequences frame-by-frame, where each frame is synthesized through a diffusion process conditioned on preceding frames \cite{chen2025skyreels,zhang2025packing,wan2025wan}. Standard training strategies include Teacher Forcing (TF) \cite{wan2025wan,ho2022video}, which conditions on clean historical frames with shared noise schedules, and Diffusion Forcing (DF) \cite{chen2024diffusion,chen2025skyreels,teng2025magi}, which uses independently noised contexts. To enable efficient streaming inference, recent methods adapt pretrained bidirectional models by introducing block-causal attention patterns \cite{yin2025slow,lin2025autoregressive}. These patterns apply bidirectional self-attention within local temporal windows while maintaining causal dependencies across blocks, thereby supporting KV-cache reuse during sequential generation.

To further address the train-test distribution gap, several approaches integrate self-rollout training (also termed Self Forcing \cite{huang2025self} or Student Forcing \cite{lin2025autoregressive}), where models learn from their own generated trajectories rather than solely from ground-truth data \cite{liu2025rolling,cui2025self}. These methods typically perform distillation simultaneously, requiring the model to learn both autoregressive dynamics and accelerated sampling concurrently \cite{lu2025reward,hong2025relic,yin2024one}. However, this joint optimization presents significant training challenges, with existing approaches typically requiring four or more sampling steps to maintain acceptable quality \cite{yang2025longlive}. In contrast, our work targets single-step autoregressive video generation, achieving robust streaming generation with minimal inference cost.

\paragraph{Accelerating video diffusion models.}
Diffusion distillation aims to compress multi-step sampling processes into fewer iterations while preserving generation quality. Existing approaches can be categorized into trajectory-level and distribution-level methods. Trajectory-level techniques approximate the sampling trajectories of teacher models through progressive distillation that iteratively halves the number of steps \cite{salimans2022progressive}, consistency models that map arbitrary trajectory points to their origins \cite{song2023consistency}, or rectified flow methods that straighten sampling paths \cite{liu2022flow}.
Distribution-level methods, by contrast, directly match the output distributions between student and teacher models. Representative approaches include adversarial distillation, which employs discriminators to align the distributions of real and generated data \cite{lin2025diffusion,xu2024ufogen,sauer2024adversarial}, and score distillation methods that minimize the reverse KL divergence using the score functions of real and fake distributions \cite{wang2023prolificdreamer,yin2024one,yin2024improved,lu2025adversarial}.

In the video domain, existing work largely adapts image distillation techniques to bidirectional models that generate short clips of fixed duration \cite{shao2025magicdistillation,cheng2025phased,mao2025osv}. APT2 represents the most relevant prior work, applying adversarial distillation to autoregressive video generation \cite{lin2025autoregressive}.
Our work differs from APT2 in four aspects. First, APT2 relies on a closed-source model, whereas our study is built on the publicly available Wan 2.1 backbone~\cite{wan2025wan} and reports key implementation details of the training recipe. Second, APT2 uses a causal discriminator with frame-wise discrimination; in contrast, we use a bidirectional discriminator with a video-level logit, so the training-time critic can evaluate a complete rollout with future context. Third, we explicitly separate one-step initialization and adversarial refinement through a DMD warm-up stage, which avoids the instability of cold-start adversarial training. Fourth, we provide controlled ablations of backbone visibility and logit granularity, showing that the causal/frame-wise design is prone to static-video collapse while bidirectional video-level discrimination gives more stable long-horizon generation.

%% file: sections/preliminaries.tex
\section{Preliminaries} \label{sec:preliminaries}

\paragraph{Video notation and sliding-window causal streaming.}
We denote a video clip by $x_{1:T}=(x_1,\dots,x_T)$, where each frame $x_t\in\mathbb{R}^{H\times W\times C}$ (height, width, channels). Let $c$ denote optional conditioning (e.g., text). In sliding-window causal streaming, frames are generated one at a time. At step $t$ the model produces frame $\hat{x}_t$ conditioned on (i) the previous $L$ frames (sliding window of size $L$) and (ii) a set of $S$ \emph{sink frames} $x_{1:S}$ that are always retained from the beginning of the sequence:
\begin{equation}
\hat{x}_t \sim p_\theta\bigl(\cdot \mid x_{1:S},\, \hat{x}_{t-L:t-1},\, c\bigr).
\end{equation}
The sink frames provide a fixed anchor to the start of the video and help maintain long-range consistency, while the sliding window captures recent temporal context. We write $x_{\mathrm{ctx},t}=(x_{1:S},\hat{x}_{t-L:t-1})$ to denote the visual context (and omit the subscript $t$ when it is clear). The window slides forward after each frame is generated. Despite these mechanisms, errors can still compound over long sequences, leading to temporal drift.

\paragraph{Distribution matching distillation (DMD).}
DMD transfers a strong teacher diffusion model $p_{\mathrm{T}}$ to a fast student causal generator $G_\theta$ by minimizing a distribution-level divergence. Given noise $z_t\sim\mathcal{N}(0,I)$, visual context $x_{\mathrm{ctx},t}$, and text conditioning $c$, the student produces $\hat{x}_t=G_\theta(z_t,x_{\mathrm{ctx},t},c)$. The DMD objective encourages $p_{G_\theta}\!\approx\!p_{\mathrm{T}}$ using score-based distribution-matching gradients derived from real and fake score estimates. DMD is stable for few-step distillation, but quality can degrade when pushed to a single step.

\paragraph{Adversarial distillation.}
GAN-based distillation trains a causal generator $G_\theta$ together with a discriminator $D_\psi$ that distinguishes real frames from generated ones. The standard adversarial objective is
\begin{equation}
\begin{split}
\min_{G_\theta}\max_{D_\psi}\; & \mathbb{E}_{x}\bigl[\log D_\psi(x)\bigr] \\
& + \mathbb{E}_{z_t}\bigl[\log\bigl(1-D_\psi(G_\theta(z_t,x_{\mathrm{ctx},t},c))\bigr)\bigr].
\end{split}
\end{equation}
For causal streaming, the generator $G_\theta$ must remain strictly causal, producing each frame $\hat{x}_t=G_\theta(z_t,x_{\mathrm{ctx},t},c)$ using the visual context defined above. The discriminator $D_\psi$ can be either causal (past-only) or bidirectional (accessing future frames during training). This paper studies how discriminator design and a three-stage asymmetric adversarial distillation recipe affect one-step causal generation quality.

%% file: sections/method.tex
\section{Asymmetric Adversarial Distillation}
\label{sec:method}
We study one-step autoregressive image-to-video generation for streaming video applications. Our training pipeline has three stages: (i) ODE initialization via Diffusion Forcing on teacher denoising trajectories under noisy context, (ii) one-step DMD warmup under self-rollout context by matching real and fake scores, and (iii) asymmetric adversarial refinement with a causal generator trained against a bidirectional discriminator with video-level discrimination, see Figure~\ref{fig:pipeline}.

\begin{figure*}[t]
	\centering
	\includegraphics[width=0.95\linewidth]{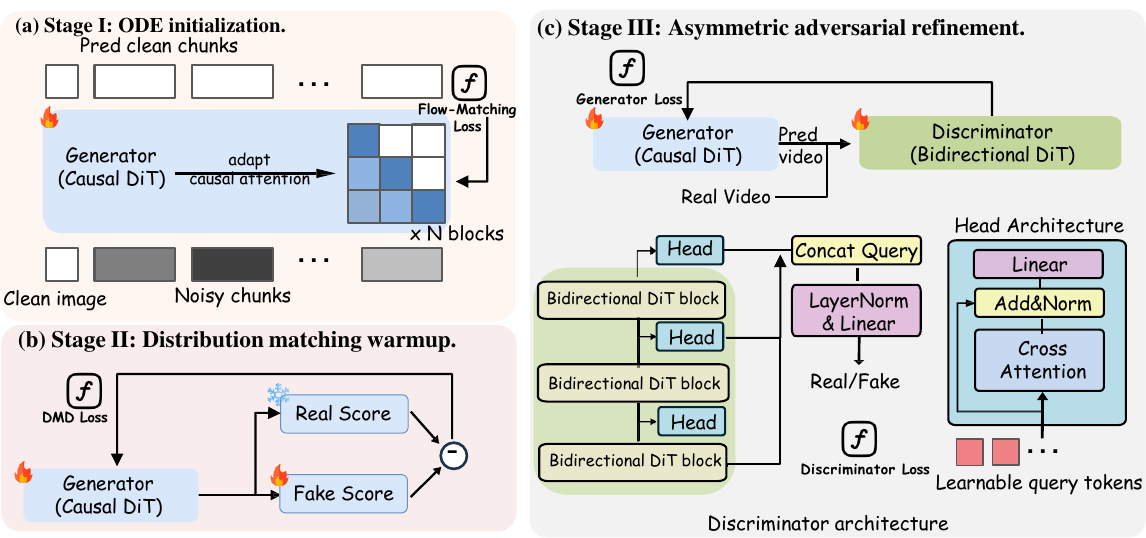}
	\caption{
    \textbf{Training Pipeline.} We train a one-step autoregressive generator $G_\theta$ through three stages. \textbf{(a) Stage I: ODE initialization} replaces bidirectional attention in pre-trained video models with block-wise causal attention, trained by diffusion-forcing with flow-matching loss. \textbf{(b) Stage II: One-step DMD Warmup} distills a strong diffusion teacher under self-rollout training by matching real and fake scores, bringing the student distribution close to the teacher. \textbf{(c) Stage III: Asymmetric Adversarial Refinement} autoregressively rolls out $G_\theta$ and trains it against a bidirectional discriminator. The discriminator uses bidirectional DiT blocks where a single group of learnable query tokens are used to aggregate full video context for video-level discrimination.
    }
	\label{fig:pipeline}
\end{figure*}

\paragraph{Causal architecture adaptation.}
We follow the notation in Section~\ref{sec:preliminaries}. In particular, the student causal generator produces one chunk in a single forward pass, $\hat{x}_t=G_\theta(z_t,x_{\mathrm{ctx},t},c)$, and is deployed autoregressively with a sliding-window visual context $x_{\mathrm{ctx},t}=(x_{1:S},\hat{x}_{t-L:t-1})$.

\paragraph{Stage I: ODE initialization.}
Following prior work on causal video generation~\cite{yin2025slow,huang2025self}, we first use a bidirectional teacher (Wan 2.1 T2V~\cite{wan2025wan}) to generate denoising trajectories as supervision targets. We then train the causal student generator $G_\theta$ to regress these teacher trajectories. To align with the few-step inference target (e.g., 1 or 2 steps), we restrict the regression supervision to those specific discrete timesteps used in the downstream stages, rather than the full ODE trajectory. This is implemented via a Diffusion Forcing~\cite{chen2024diffusion} objective where context chunks are noised at levels corresponding to this discrete schedule. Let $\tilde{x}_{\mathrm{ctx},t}$ denote the noisy context and $\mathcal{S}^{\mathrm{ODE}}_\phi(\cdot)$ the ODE-based teacher sampler, the optimization function is defined as:
\begin{equation}
\mathcal{L}_{\mathrm{ODE}}(\theta)
=\mathbb{E}_{t,\,z_t}\Big[\big\|G_\theta(z_t, \tilde{x}_{\mathrm{ctx},t}, c) - \mathcal{S}^{\mathrm{ODE}}_\phi(z_t, \tilde{x}_{\mathrm{ctx},t}, c)\big\|_2^2\Big].
\end{equation}
Autoregressive video generation requires adapting pre-trained bidirectional video models into autoregressive generators by replacing bidirectional full-attention with block-wise causal attention. This stage provides stable initialization for subsequent one-step distillation.

\paragraph{Stage II: distribution matching warmup.}
We employ Self-Forcing Distribution Matching Distillation~\cite{huang2025self} to holistically align the student's autoregressive distribution $p_\theta$ with the teacher's distribution. This framework utilizes three models: the causal student $G_\theta$, a frozen bidirectional teacher $s_{\text{real}}$ (Real Score), and a dynamically updated bidirectional model $s_{\text{fake}}$ (Fake Score).
During training, we first perform autoregressive self-rollout to generate a full clip $\hat{x}_{1:T}$ from the student $p_\theta$ using self-rollout context $\hat{x}_{\mathrm{ctx},t}=(x_{1:S},\hat{x}_{t-L:t-1})$:
\begin{equation}
\hat{x}_t = G_\theta(z_t, \hat{x}_{\mathrm{ctx},t}, c), \quad t=1,\dots,T.
\end{equation}
To match distributions, we perturb the entire generated sequence to a random noise level $\tau$ to obtain $\hat{x}_{1:T, \tau}$. The Fake Score model $s_{\text{fake}}$ is trained to estimate the score of the generated distribution via denoising score matching:
\begin{equation}
\mathcal{L}_{\mathrm{score}}(\phi) = \mathbb{E}_{\hat{x}\sim p_\theta, \tau, \epsilon} \Big[ \| s_{\text{fake}}(\hat{x}_{1:T,\tau}, \tau, c) - \epsilon \|_2^2 \Big].
\end{equation}
Concurrently, the generator $G_\theta$ is updated to minimize the distribution divergence using the gradients derived from the discrepancy between real and fake scores:
\begin{equation}
\begin{split}
\nabla_\theta \mathcal{L}_{\mathrm{DMD}} = -\mathbb{E}_{\hat{x}\sim p_\theta, \tau} \Big[ \big( & s_{\text{real}}(\hat{x}_{1:T,\tau}, \tau, c) \\
& - s_{\text{fake}}(\hat{x}_{1:T,\tau}, \tau, c) \big)^\top \nabla_\theta \hat{x}_{1:T} \Big].
\end{split}
\end{equation}
Compared to teacher forcing distillation, this self-rollout distribution matching effectively bridges the train-test gap.

\paragraph{Stage III: asymmetric adversarial refinement.}
We refine the one-step generator with adversarial training. We construct a discriminator $D_\psi$ using the Wan 2.1 T2V~\cite{wan2025wan} backbone initialized from pre-trained weights. Following the APT~\cite{lin2025diffusion} architecture, we insert cross-attention heads at the 19th, 29th, and 39th transformer layers to aggregate spatiotemporal features into a scalar score.
Unlike APT which operates on clean inputs, we apply Gaussian noise to the discriminator inputs according to a randomly sampled timestep $\tau$. This noise injection is essential for stabilizing the training of our asymmetric generator-discriminator pair.
We sample a generated clip $\hat{x}_{1:T}$ by rolling out the causal generator autoregressively:
\begin{equation}
\hat{x}_t = G_\theta(z_t, (x_{1:S},\hat{x}_{t-L:t-1}), c),\qquad t=1,\dots,T.
\end{equation}
We train a discriminator $D_\psi$ on full clips (hence bidirectional during training), while keeping $G_\theta$ strictly causal. Let $x_{1:T,\tau}=\alpha_\tau x_{1:T}+\sigma_\tau\epsilon$ and $\hat{x}_{1:T,\tau}=\alpha_\tau\hat{x}_{1:T}+\sigma_\tau\epsilon$, with $\epsilon\sim\mathcal{N}(0,I)$, denote real and generated clips perturbed at timestep $\tau$, which is also provided to the discriminator. Using the standard logistic GAN objective, we optimize
\begin{equation}
\begin{split}
\mathcal{L}_D(\psi) = & -\mathbb{E}_{x\sim p_{\mathrm{data}}, \tau}\big[\log D_\psi(x_{1:T,\tau}, \tau, c)\big] \\
& -\mathbb{E}_{\hat{x}\sim p_\theta, \tau}\big[\log(1-D_\psi(\hat{x}_{1:T,\tau}, \tau, c))\big],
\end{split}
\label{eq:gan_d}
\end{equation}
\begin{equation}
\mathcal{L}_G(\theta) = -\mathbb{E}_{\hat{x}\sim p_\theta, \tau}\big[\log D_\psi(\hat{x}_{1:T,\tau}, \tau, c)\big].
\label{eq:gan_g}
\end{equation}
To stabilize training, we employ approximated R1 and R2 regularizations~\cite{lin2025diffusion}, penalizing the discriminator's sensitivity to small perturbations on real and generated samples, respectively:
\begin{equation}
\begin{split}
\mathcal{L}_{\text{reg}}(\psi) &= \mathbb{E}_{x,\tau} \big[ \| D_\psi(x_{1:T,\tau}, \tau, c) - D_\psi(x_{1:T,\tau}+\delta, \tau, c) \|^2 \big] \\
+& \mathbb{E}_{\hat{x},\tau} \big[ \| D_\psi(\hat{x}_{1:T,\tau}, \tau, c) - D_\psi(\hat{x}_{1:T,\tau}+\delta, \tau, c) \|^2 \big],
\end{split}
\end{equation}
where $\delta \sim \mathcal{N}(0, \sigma^2 I)$ is a small perturbation applied at the same discriminator timestep $\tau$.
The discriminator is optimized with $\mathcal{L}_D+\lambda\mathcal{L}_{\text{reg}}$.
The bidirectional discriminator aggregates full video context through learnable query tokens, providing stronger temporal consistency signals including sensitivity to long-horizon drift.

\paragraph{Rationale for staged training design.}
Directly training an asymmetric setup (causal $G_\theta$ with a bidirectional $D_\psi$) is empirically unstable in the 1-step regime. The ODE and DMD stages move the student close to the teacher distribution, after which adversarial refinement can focus on improving visual quality and temporal coherence. Furthermore, since the teacher distribution and the real data distribution are inherently misaligned, adopting a DMD2-style joint DMD+GAN loss~\cite{yin2024improved} causes the two objectives to conflict: the DMD loss pulls the generator toward the teacher while the GAN loss pulls it toward real data, resulting in unstable training dynamics~\cite{tong2025flow, cheng2025phased}. Separating them into sequential stages avoids this instability. We find this three-stage design crucial for stable training and high-quality results.

\begin{table*}[!t]
\small
\centering
\caption{\textbf{Quantitative comparison on VBench-I2V~\cite{huang2023vbench}.} We compare our method against autoregressive baselines using 4-NFE sampling (CausVid~\cite{yin2025slow} and Self Forcing~\cite{huang2025self}), and include the bidirectional model Wan 2.1 I2V~\cite{wan2025wan} with 100-NFE sampling (50 steps with CFG guidance) as reference. Our model with full three-stage training achieves state-of-the-art performance among autoregressive methods using only a single sampling step. The \textbf{best} result in each column is shown in bold, and the \underline{second-best} result is underlined.}
\begin{tabular}{l|cccccc|cc}
\toprule
\multirow{2}{*}{Method} & \multicolumn{6}{c|}{Quality} & \multicolumn{2}{c}{Condition} \\
\cmidrule(lr){2-7} \cmidrule(lr){8-9}
& \makecell{Subject\\Consistency}$\uparrow$ & \makecell{Background\\Consistency}$\uparrow$ & \makecell{Motion\\Smoothness}$\uparrow$ & \makecell{Dynamic\\Degree}$\uparrow$ & \makecell{Aesthetic\\Quality}$\uparrow$ & \makecell{Imaging\\Quality}$\uparrow$ & \makecell{I2V\\Subject}$\uparrow$ & \makecell{I2V\\Background}$\uparrow$ \\
\midrule
\cellcolor{lightgray}\textit{Bidirectional} &\cellcolor{lightgray}&\cellcolor{lightgray}&\cellcolor{lightgray}&\cellcolor{lightgray}&\cellcolor{lightgray}&\cellcolor{lightgray}&\cellcolor{lightgray}&\cellcolor{lightgray} \\
Wan 2.1 I2V (100 NFE) & \underline{93.88} & \underline{94.86} & 98.14 & \textbf{51.09} & \textbf{64.97} & 70.12 & \underline{96.80} & \textbf{98.59} \\
\cellcolor{lightgray}\textit{Autoregressive} &\cellcolor{lightgray}&\cellcolor{lightgray}&\cellcolor{lightgray}&\cellcolor{lightgray}&\cellcolor{lightgray}&\cellcolor{lightgray}&\cellcolor{lightgray}&\cellcolor{lightgray} \\
CausVid (4 NFE) & 83.45 & 89.37 & \textbf{98.61} & 33.80 & \underline{61.55} & 70.60 & 92.91 & 83.34 \\
Self Forcing (4 NFE) & 91.77 & 93.41 & \underline{98.55} & 34.93 & 60.96 & \textbf{71.50} & 95.79 & 91.18 \\
\midrule
Ours (1 NFE, Stage-II) & 92.14 & 92.13 & 98.04 & \underline{50.30} & 58.64 & 69.37 & 96.56 & 95.12\\
Ours (1 NFE, Stage-III) & \textbf{94.34} & \textbf{95.08} & 98.22 & 41.46 & 60.07 & \underline{71.49} & \textbf{98.65} & \underline{97.83} \\
\bottomrule
\end{tabular}
\label{tab:shortvideo}
\end{table*}

\paragraph{Long-video generation mechanisms.}
To enable stable infinite streaming, we adopt a Sink Token + Sliding Window attention mechanism~\cite{xiao2023efficient}. We dedicate the first few tokens as ``sink tokens'' that always participate in attention to preserve global identity information, combined with a local sliding window for recent motion context. Furthermore, we implement Relative RoPE (similar to StreamingLLM~\cite{xiao2023efficient}) to handle positional encoding extrapolation, ensuring that the relative distances between query and key embeddings remain within the training distribution regardless of the absolute frame index.

\paragraph{Implementation details.} We employ the 14B Wan 2.1 T2V model as our backbone. For Image-to-Video (I2V), we encode the conditioning frame into the first KV cache position as a standalone chunk, while subsequent generation uses a chunk size of 4. We set the attention sink size to 1 and local window size to 9. Stages 1 and 2 follow Self Forcing~\cite{huang2025self}. Specifically, we train the Stage 1 ODE model for 2,000 steps. In Stage 2, we set the update frequency ratio between the generator and the fake score model to 1:5. We train the DMD generator for only 100 steps and employ early stopping, as prolonged training empirically leads to motion collapse. In Stage 3, we initialize the discriminator with the Wan 2.1 T2V backbone and an APT-style head~\cite{lin2025diffusion}, inserting cross-attention blocks at layers 19, 29, and 39.
We utilize the approximated R1 and R2 regularizations as described in the Method section to stabilize the 14B model, setting the regularization weight $\lambda=20$ with a perturbation scale of $\sigma_{\text{reg}}=0.05$.
Additionally, we apply timestep-dependent Gaussian noise to the discriminator inputs, sampling $\tau \sim \mathcal{U}[0, 1000]$ to match the generator's noise schedule.
For the generator, we use a learning rate of $4\times10^{-7}$ with EMA decay $0.98$; for the discriminator, we do not apply EMA and set the backbone learning rate to $1\times10^{-6}$ and the head learning rate to $2\times10^{-6}$. We use a batch size of 256 via gradient accumulation for training stability and train the generator for 200 steps.

%% file: sections/experiments.tex
\section{Experiments}
\label{sec:experiments}

\begin{figure*}[!t]
	\centering
	\includegraphics[width=0.98\linewidth]{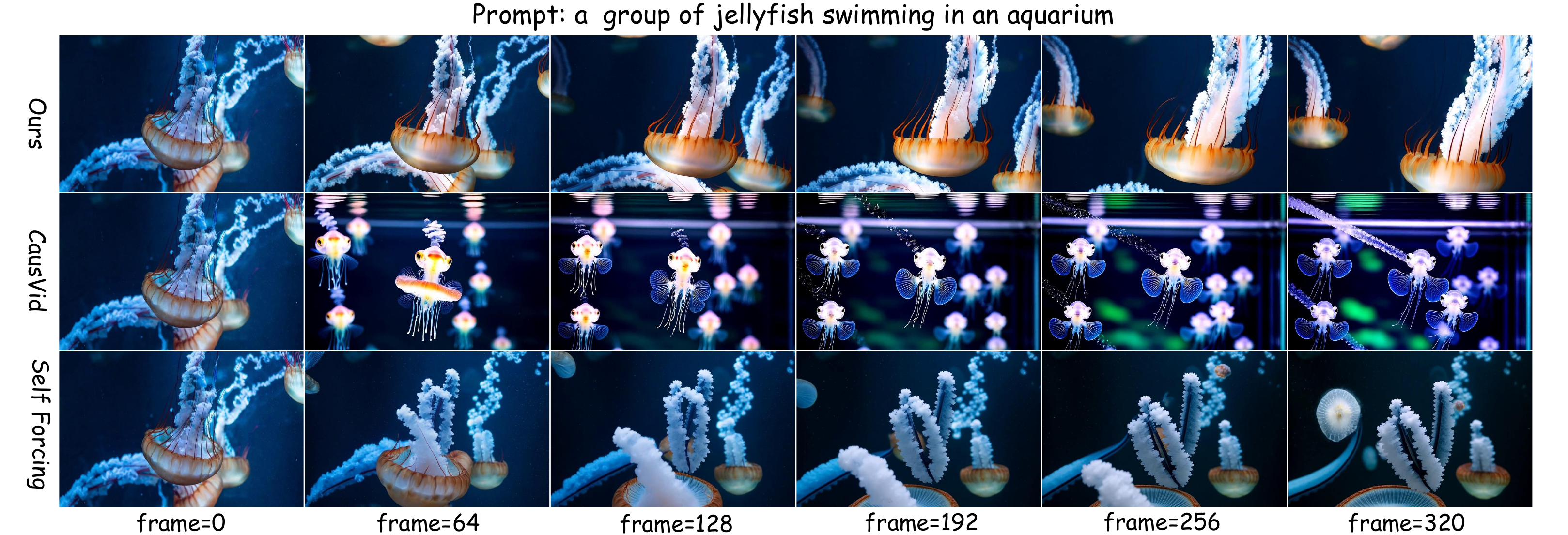}
	\caption{\textbf{Qualitative comparison.} We compare our method against autoregressive baselines using 4-NFE sampling (CausVid~\cite{yin2025slow} and Self Forcing~\cite{huang2025self}). Given a conditioning image of a swimming jellyfish, our method synthesizes vivid motion while maintaining visual fidelity and identity consistency over long horizons (up to 320 frames), whereas baselines exhibit identity drift.}
	\label{fig:qualitative}
\end{figure*}

We evaluate the effectiveness of our proposed Asymmetric Adversarial Distillation framework on large-scale video generation benchmarks. We focus on two key aspects: (1) the quality and stability of few-step streaming generation compared to autoregressive and diffusion baselines, and (2) the impact of discriminator architecture design on training stability and motion quality.

\subsection{Comparison with State-of-the-Art Methods}
We evaluate I2V short-video generation under the official VBench standard protocol, producing 5-second clips at a unified 480p resolution. We compare against representative diffusion and autoregressive baselines in Table~\ref{tab:shortvideo}, including Wan 2.1~\cite{wan2025wan}, CausVid~\cite{yin2025slow}, and Self Forcing~\cite{huang2025self}. For CausVid and Self Forcing, we follow their published evaluation settings and report zero-shot results. Table~\ref{tab:shortvideo} reports per-aspect VBench metrics on both generation quality and conditioning faithfulness. Overall, our method achieves strong I2V conditioning performance and imaging quality. Figures~\ref{fig:qualitative} and~\ref{fig:user_study} provide qualitative comparisons and user preferences, respectively.

As shown in Table~\ref{tab:shortvideo}, our one-step model achieves competitive generation quality compared to multi-step autoregressive baselines while requiring only a single forward pass. In particular, the Stage-III model achieves the best autoregressive performance in subject consistency (94.34), background consistency (95.08), and I2V subject faithfulness (98.65), while also reaching 97.83 on I2V background faithfulness and 71.49 on imaging quality. Compared with CausVid and Self Forcing, our method substantially improves scene coherence and conditioning preservation, indicating that the proposed asymmetric adversarial distillation effectively stabilizes long-horizon generation. We also observe a clear trade-off between Stage-II and Stage-III training: Stage-II yields stronger motion magnitude (Dynamic Degree 50.30), whereas Stage-III provides better consistency and faithfulness overall. Figures~\ref{fig:qualitative} and~\ref{fig:user_study} further support these findings: our method reduces identity drift and receives higher user preference scores in perceptual comparisons.

We further assess perceptual quality via a side-by-side user study on motion realism and image quality. Figure~\ref{fig:user_study} shows that our method is preferred over both Self Forcing and CausVid, indicating stronger perceived quality.

\begin{figure}[!ht]
	\centering
	\includegraphics[width=0.98\linewidth]{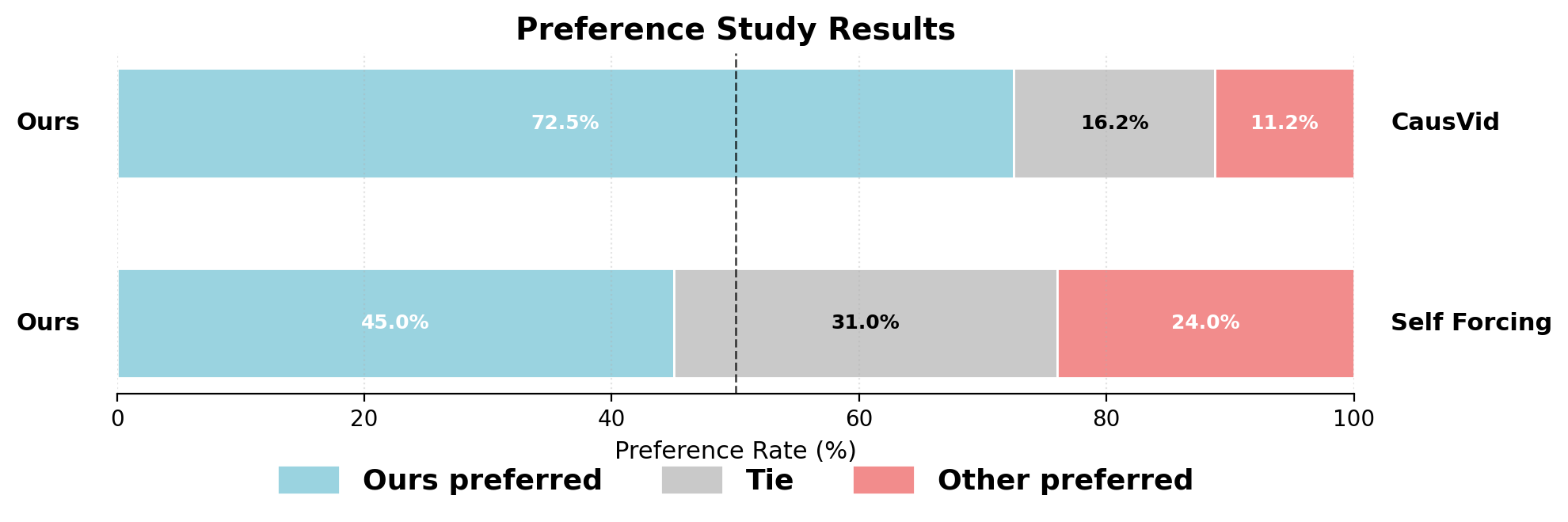}
	\caption{\textbf{User Preference Study.} Win rates of our method against baselines (Self Forcing, CausVid). Our method is preferred in the majority among these methods.}
	\label{fig:user_study}
\end{figure}

\begin{figure}[t]
	\centering
	\includegraphics[width=0.98\linewidth]{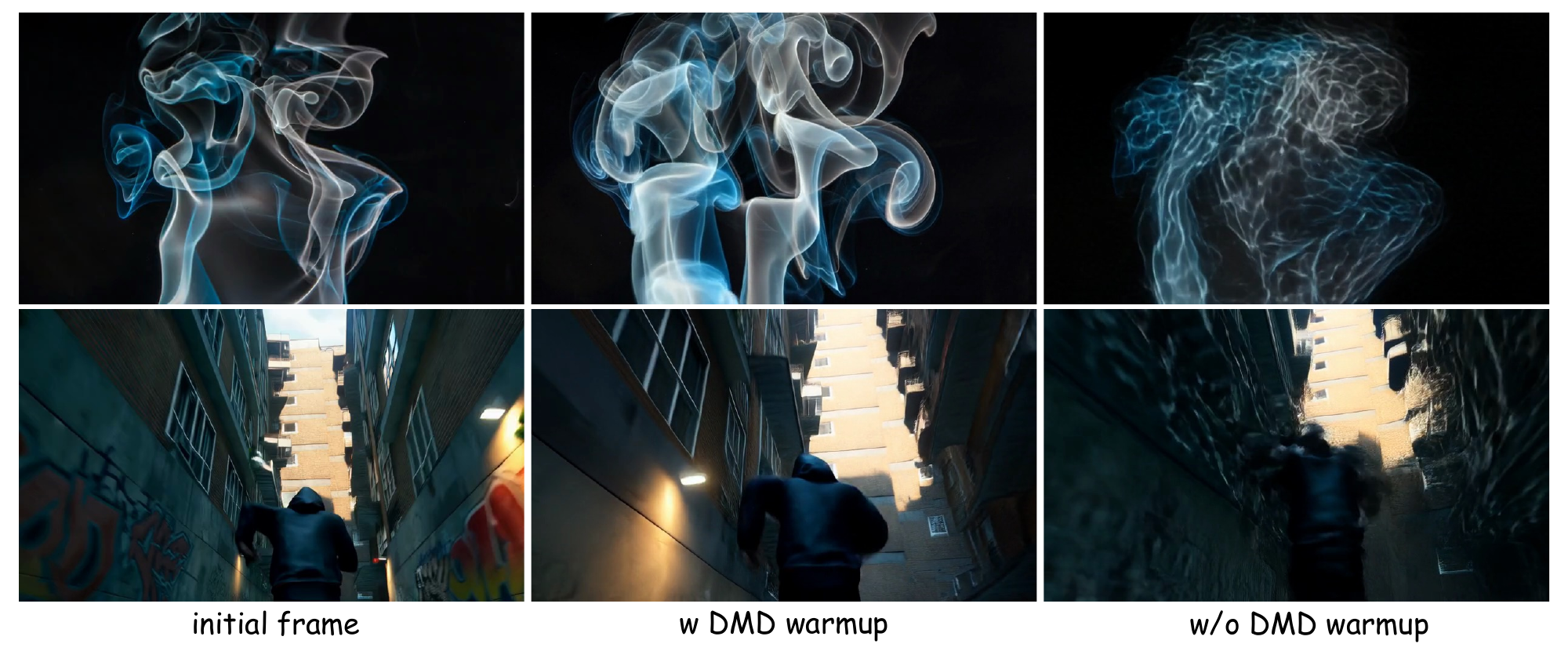}
	\caption{\textbf{Stage-wise ablation of DMD warmup.} DMD warmup helps stabilize subsequent adversarial refinement and prevents severe visual degradation.}
	\label{fig:dmd_warmup_ablation}
\end{figure}

\begin{figure*}[!ht]
	\centering
	\includegraphics[width=0.98\linewidth]{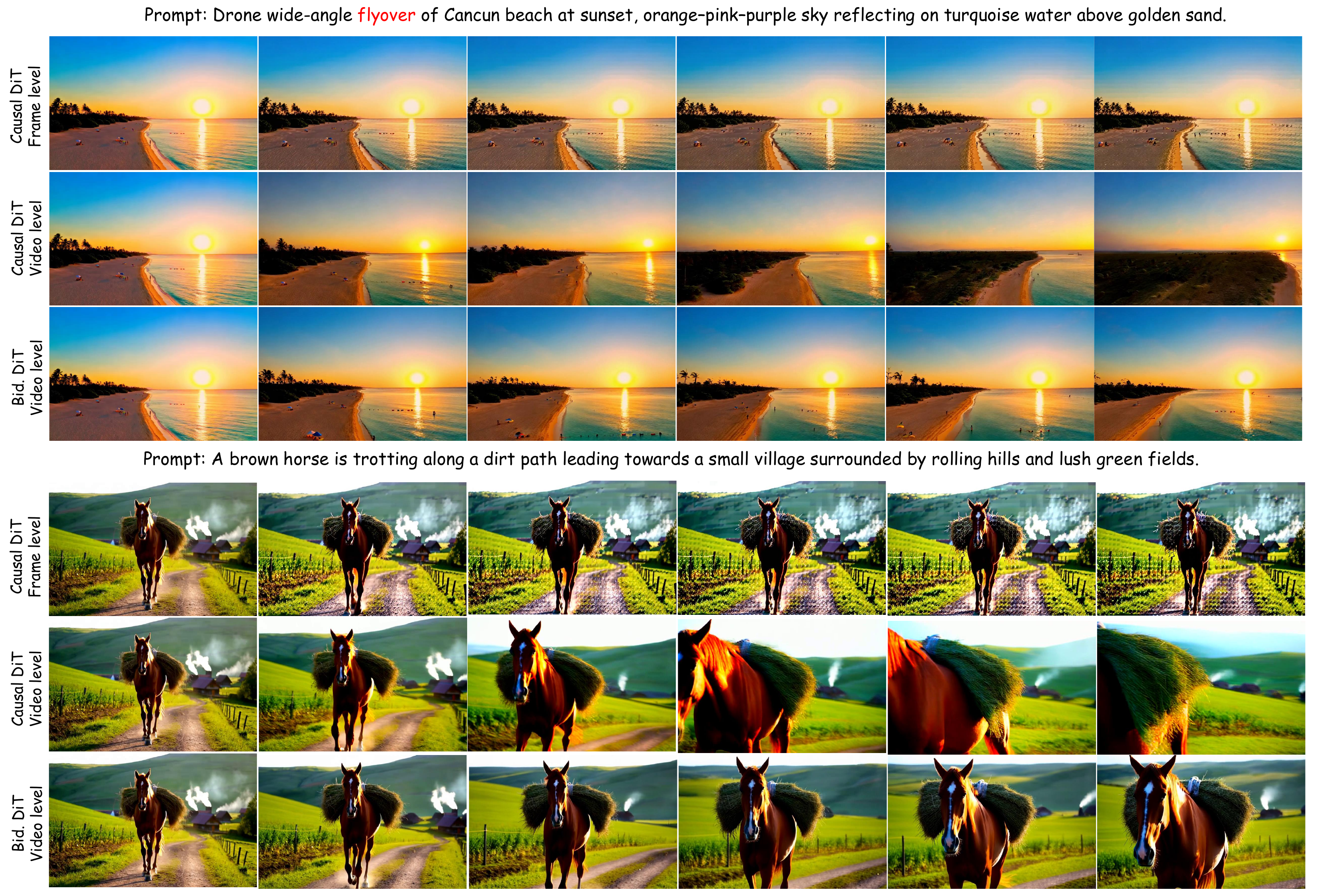}
	\caption{\textbf{Qualitative ablation study.} We compare generated motion under four settings: (a) \textbf{Causal backbone w/ frame-wise logits} results in completely static videos; (b) \textbf{Causal backbone w/ video-wise logit} and (c) \textbf{Bidirectional backbone w/ frame-wise logits} are both prone to drift, exhibiting erratic camera movement, excessive speed, or color shifts. \textbf{(d) Bidirectional backbone w/ video-wise logit (Ours)} achieves the best performance with stable generation.}
	\label{fig:discriminator_ablation}
\end{figure*}

\subsection{Ablation Studies}
\label{subsec:ablation}
We investigate optimal training strategies for one-step causal generation. We first examine the necessity of the stage-wise DMD training pipeline in Figure~\ref{fig:dmd_warmup_ablation}, and then ablate discriminator topology at the 14B scale to understand what forms of adversarial supervision lead to stable long-horizon motion. Finally, we analyze why a full-step causal teacher can be unreliable as supervision due to drift.

For the Causal Backbone settings in our ablation, we initialize the discriminator from the Stage 2 DMD-trained generator, ensuring both models start from the same distribution. We also enforce the exact same block-wise causal attention mask. Regarding the logit heads: for video-wise logits, the learnable query token performs cross-attention over the entire spatiotemporal sequence to aggregate global features; for frame-wise logits, the query token performs cross-attention restricted to individual frame tokens independently, lacking global temporal aggregation capabilities.

\paragraph{Ablation on DMD warmup.}
We ablate the DMD warmup stage to verify whether adversarial refinement alone can reliably train a one-step autoregressive generator. As shown in Table~\ref{tab:dmd_warmup} and Figure~\ref{fig:dmd_warmup_ablation}, removing DMD warmup leaves the initial generator distribution too far from the data distribution, making the subsequent GAN objective unstable and causing severe visual degradation. With DMD warmup, the generator starts from a much better one-step solution, preserving scene structure and object appearance before adversarial training improves temporal realism.

\begin{table}[t]
\small
\centering
\caption{\textbf{Ablation on DMD warmup.} DMD warmup improves one-step generation quality before adversarial refinement.}
\begin{tabular}{lcc}
\toprule
Method & \makecell{Aesthetic\\Quality}$\uparrow$ & \makecell{Imaging\\Quality}$\uparrow$ \\
\midrule
w/o DMD warmup & 53.63 & 62.81 \\
w/ DMD warmup & 58.64 & 69.37 \\
\bottomrule
\end{tabular}
\label{tab:dmd_warmup}
\end{table}

\paragraph{Analysis of discriminator architectures.}
We systematically analyze the impact of discriminator topology (Table~\ref{tab:ablation}), with qualitative examples shown in Figure~\ref{fig:discriminator_ablation}, through the lens of our theoretical proofs in Appendix~\ref{app:ablation_theory}. The evaluation is conducted on 100 videos randomly sampled from the VBench-I2V benchmark and our dataset. We measure Dynamic Degree on 5-second videos, while Drift Score is evaluated on 20-second rollouts to better capture long-horizon error accumulation. The primary driver of performance is the backbone's causality. As proven in Proposition~\ref{prop:cb_linear_bound_kl}, causal discriminators effectively suffer from linear error accumulation. A causal backbone prevents the future-anchored gradients necessary to critique early decisions based on global outcomes (Proposition~\ref{prop:bb_future_anchor}). For causal backbones, granularity is critical: frame-wise heads produce completely static videos (Dynamic Degree 1.08), while video-wise heads restore motion (42.07) but still exhibit severe drift. We attribute the motion collapse in frame-wise discrimination to a trivial solution: since the discriminator only evaluates the marginal distribution $p(x_t)$ of each frame independently, and any previous frame $x_{t-1}$ is itself a perfectly realistic image, the generator can achieve a high discriminator score by simply copying $G(x_{<t}) = x_{t-1}$, producing static video. Video-wise heads avoid this failure mode by enforcing temporal coherence across the sequence.

For bidirectional backbones, both granularity settings perform comparably, with video-wise logits achieving slightly better drift mitigation (4.02 vs.\ 4.38). We hypothesize that bidirectional attention already enables deep feature interaction across the entire spatiotemporal volume within the bidirectional DiT backbone, which makes the head's aggregation strategy less critical.

\begin{table}[t]
\small
\centering
\caption{\textbf{Ablation on Discriminators.} We compare Causal vs.\ Bidirectional visibility and Frame-wise vs.\ Video-wise granularity. Causal + Frame-wise produces completely static videos (Dynamic Degree 1.08); Causal + Video-wise has high dynamics but severe drift. Bidirectional backbones provide stable supervision, with Video-wise logits achieving the best drift mitigation.}
\begin{tabular}{cccc}
	\toprule
Backbone & \makecell{Logit\\Granularity} & \makecell{Drift\\Score}$\downarrow$ & \makecell{VBench\\Dynamics}$\uparrow$ \\
\midrule
Causal DiT & Frame-wise  & N/A & 1.08 \\
Causal DiT & Video-wise  & 7.10 & \textbf{42.07} \\
Bidirectional DiT & Frame-wise  & 4.38 & 39.04 \\
Bidirectional DiT & Video-wise  & \textbf{4.02} & 39.29 \\
\bottomrule
\end{tabular}
\label{tab:ablation}
\end{table}

\paragraph{Drift in a full-step causal teacher.}
To isolate the limitations of causal supervision itself, we construct a full-step causal teacher by adapting a Wan 2.1 T2V model~\cite{wan2025wan} into a causal generator using the 1.3B variant. Specifically, we replace bidirectional attention with a block-wise causal mask, allowing tokens within a frame chunk to attend bi-directionally while preventing attention to future chunks. We train this causal teacher using Diffusion Forcing~\cite{chen2024diffusion}, which conditions the current chunk's denoising process on noisy versions of previous chunks to bridge the train--test gap. At inference time, the model generates videos autoregressively in a chunk-wise manner.

\begin{figure}[!ht]
	\centering
	\includegraphics[width=0.98\linewidth]{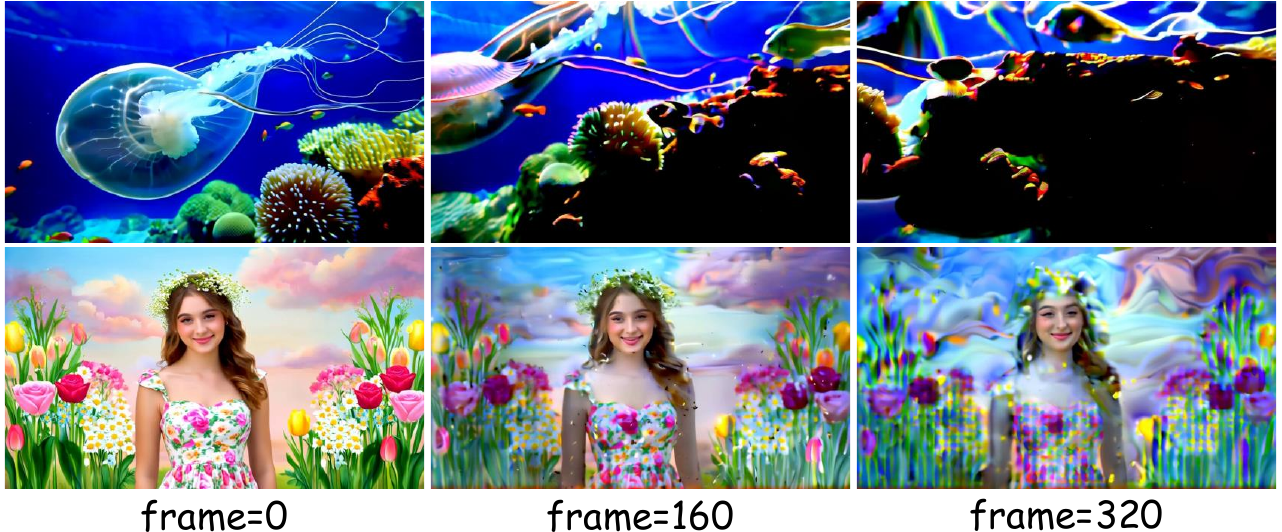}
	\caption{\textbf{Drift in Causal Video Diffusion Model.} Long-horizon rollout from the full-step causal teacher.}
	\label{fig:causal_teacher_drift}
\end{figure}

However, even when this full-step causal teacher converges, we observe severe autoregressive error accumulation: long-horizon rollouts exhibit geometric distortion and identity loss (Figure~\ref{fig:causal_teacher_drift}), suggesting a drifting distribution $p_{\mathrm{drift}}(x_{1:T})$. Using such a drifting causal teacher directly as a discriminator $D(x_{1:T})$ can therefore provide flawed supervision, since the drifting trajectory remains high-likelihood under the teacher itself. This motivates our asymmetric adversarial distillation with a bidirectional discriminator that can provide future-anchored critiques.

\begin{figure}[!ht]
	\centering
	\includegraphics[width=0.98\linewidth]{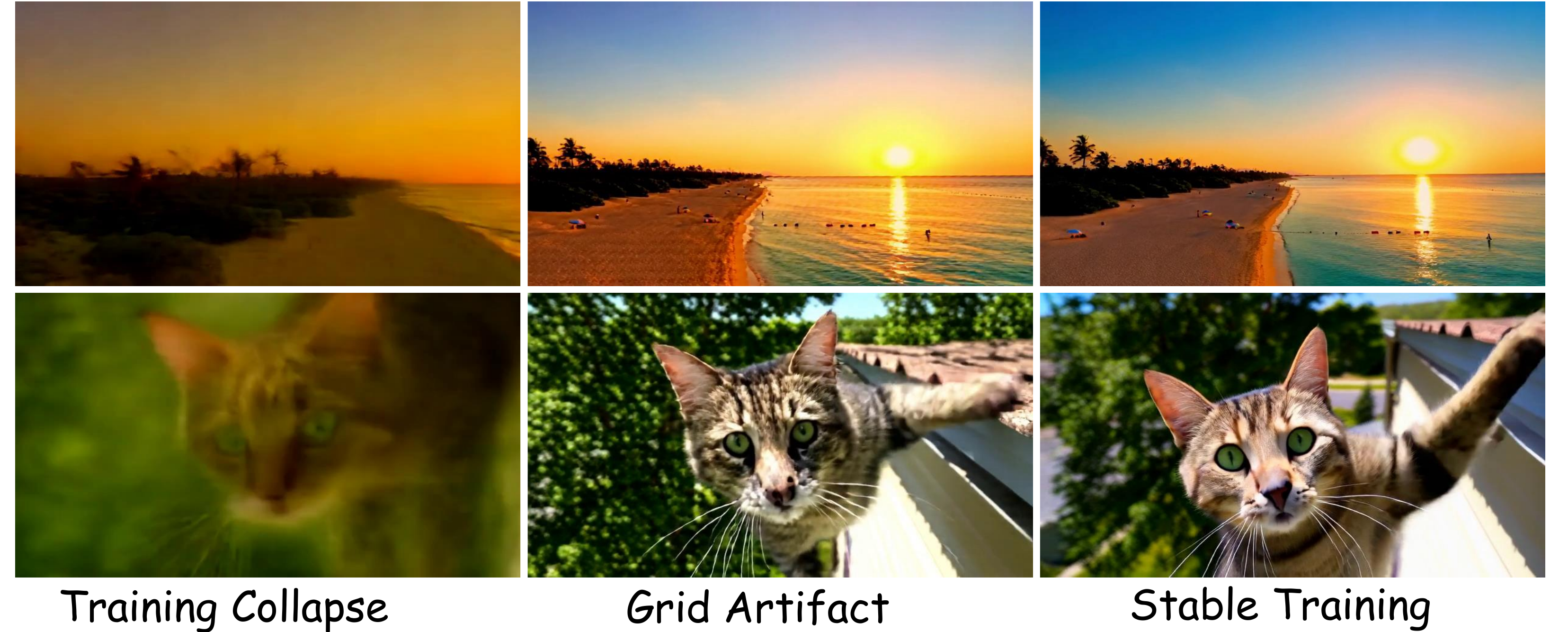}
	\caption{\textbf{Effect of regularization coefficient $\lambda$.} Without regularization ($\lambda=0$), training collapses. Excessive regularization ($\lambda=50$) introduces grid-like patterns. The optimal setting ($\lambda=20$) balances stability and visual quality.}
	\label{fig:regularization_lambda_ablation}
\end{figure}

\paragraph{Analysis of regularization coefficient.}
Beyond architectural choices, we find that the regularization coefficient $\lambda$ plays a critical role in training stability. As illustrated in Figure~\ref{fig:regularization_lambda_ablation}, setting $\lambda=0$ (i.e., removing the regularization term entirely) leads to rapid training collapse, where the generator produces degenerate outputs. Conversely, an overly large coefficient ($\lambda=50$) introduces visible grid-like artifacts in the generated frames, likely due to over-regularization suppressing fine-grained texture details. We empirically find that $\lambda=20$ strikes a good balance, maintaining stable adversarial training while preserving visual fidelity.

%% file: sections/conclusion.tex
\section{Conclusion}
We proposed \ourmethod, an asymmetric adversarial distillation framework for one-step autoregressive video generation. By employing a bidirectional discriminator with video-level holistic discrimination and a phased training strategy with distribution matching warm-up, \ourmethod\ effectively addresses motion collapse and training instability. Extensive experiments on VBench demonstrate that \ourmethod\ achieves state-of-the-art performance with superior visual quality and motion fidelity. We hope our work provides valuable insights for efficient autoregressive video generation.

%% file: sections/appendix.tex
\section{Theoretical Analysis of Ablation Settings}
\label{app:ablation_theory}

\paragraph{Notation.}
Let $x_{1:T}$ denote a video clip (conditioned on context $c$). We denote the data distribution by $p(x_{1:T})$ and the causal generator's rollout distribution by $q(x_{1:T})$. Let $x_{<t}\triangleq x_{1:t-1}$. For distributions $P,Q$ with densities $p,q$, the KL divergence is $\mathrm{KL}(P\|Q)\triangleq\mathbb{E}_{x\sim P}[\log(p(x)/q(x))]$.

\subsection{On-Policy Error Accumulation in Causal Rollouts}
\label{app:cb_linear_bound}

\begin{proposition}[Linear Error Accumulation]
\label{prop:cb_linear_bound_kl}
Let $p(x_{1:T}) = \prod_{t=1}^T p_t(x_t\mid x_{<t})$ and $q(x_{1:T}) = \prod_{t=1}^T q_t(x_t\mid x_{<t})$ be two autoregressive distributions. If the expected on-policy conditional KL divergence is bounded by $\varepsilon$ at each step, i.e.,
\begin{equation}
    \forall t, \quad \mathbb{E}_{x_{<t}\sim q}\Big[\mathrm{KL}\Big(q_t(\cdot\mid x_{<t})\,\Big\|\,p_t(\cdot\mid x_{<t})\Big)\Big] \le \varepsilon,
    \label{eq:on_policy_bound}
\end{equation}
then the joint KL divergence satisfies $\mathrm{KL}(q(x_{1:T})\|p(x_{1:T})) \le T\varepsilon$.
\end{proposition}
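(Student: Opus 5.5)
The plan is to use the chain rule for KL divergence between two autoregressive factorizations, which turns the joint KL into an exact sum of on-policy conditional KLs; the hypothesis \eqref{eq:on_policy_bound} then bounds each summand by $\varepsilon$.

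First, I would write the log-density ratio using the factorizations in the statement:
\begin{equation*}
\log\frac{q(x_{1:T})}{p(x_{1:T})}=\sum_{t=1}^T \log\frac{q_t(x_t\mid x_{<t})}{p_t(x_t\mid x_{<t})}.
\end{equation*}
Taking the expectation over $x_{1:T}\sim q$ and using linearity gives $\mathrm{KL}(q\|p)=\sum_{t=1}^T \mathbb{E}_{x_{1:t}\sim q}\big[\log(q_t/p_t)\big]$. For each $t$, the term depends only on $x_{1:t}$. Under $q$, the marginal of $x_{1:t}$ is $q(x_{<t})\,q_t(x_t\mid x_{<t})$, where $q(x_{<t})$ is the product of the first $t-1$ conditionals, obtained by integrating out $x_{t+1:T}$. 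I would apply the tower property, conditioning on $x_{<t}$. The inner expectation over $x_t\sim q_t(\cdot\mid x_{<t})$ is exactly $\mathrm{KL}(q_t(\cdot\mid x_{<t})\|p_t(\cdot\mid x_{<t}))$, which yields the chain-rule identity
\begin{equation*}
\mathrm{KL}(q\|p)=\sum_{t=1}^T \mathbb{E}_{x_{<t}\sim q}\Big[\mathrm{KL}\big(q_t(\cdot\mid x_{<t})\,\|\,p_t(\cdot\mid x_{<t})\big)\Big].
\end{equation*}
Applying \eqref{eq:on_policy_bound} termwise then gives the bound $T\varepsilon$.

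The only delicate step is justifying the interchange of summation and expectation when some terms could be infinite or not absolutely integrable. I would handle this by noting the following. If every conditional KL is finite $q$-almost surely, each summand's negative part is controlled, since conditional KL is nonnegative and $\mathbb{E}[\log(q_t/p_t)\mid x_{<t}]\ge 0$. So the iterated conditional expectations are well defined in $[0,\infty]$, and the identity holds by Tonelli applied to the nonnegative conditional KLs. If instead some conditional KL is infinite on a set of positive $q$-measure, the hypothesis already fails, so this case cannot arise. Absolute continuity $q\ll p$ on the joint follows from the conditionals being absolutely continuous almost surely, which the finiteness of the hypothesis implies. I expect the main work to be stating this measure-theoretic bookkeeping cleanly. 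The algebra itself is the standard chain rule.
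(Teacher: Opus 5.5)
Your proposal is correct and matches the paper's proof: expand the log-ratio via the autoregressive factorizations, integrate out $x_{>t}$ to get the chain-rule identity $\mathrm{KL}(q\|p)=\sum_{t=1}^T\mathbb{E}_{x_{<t}\sim q}\bigl[\mathrm{KL}(q_t(\cdot\mid x_{<t})\,\|\,p_t(\cdot\mid x_{<t}))\bigr]$, then bound each term by $\varepsilon$. Your integrability bookkeeping, which the paper omits, is sound, though the real reason the negative parts are harmless is that $\mathbb{E}_{x_t\sim q_t}\bigl[(\log(q_t/p_t))^-\bigr]\le 1$ for every $x_{<t}$, not merely that the conditional KL is nonnegative.
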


\begin{proof}
We expand the KL divergence definition using the chain rule for autoregressive models.
\begin{align*}
\mathrm{KL}(q\|p)
&= \int q(x_{1:T})\,\log\frac{\prod_{t=1}^T q_t(x_t\mid x_{<t})}{\prod_{t=1}^T p_t(x_t\mid x_{<t})}\,dx_{1:T} \\
&= \sum_{t=1}^T \int q(x_{1:T})\,\log\frac{q_t(x_t\mid x_{<t})}{p_t(x_t\mid x_{<t})}\,dx_{1:T}.
\end{align*}
Consider the $t$-th term in the summation. We decompose $q(x_{1:T}) = q(x_{<t})q_t(x_t\mid x_{<t})q(x_{>t}\mid x_{\le t})$ and integrate out the future variables $x_{>t}$:
\begin{align*}
&\int q(x_{1:T})\log\frac{q_t(x_t\mid x_{<t})}{p_t(x_t\mid x_{<t})}\,dx_{1:T} \\
&= \int q(x_{<t})\left[ \int q_t(x_t\mid x_{<t})\log\frac{q_t(x_t\mid x_{<t})}{p_t(x_t\mid x_{<t})}\,dx_t \right] dx_{<t} \\
&= \mathbb{E}_{x_{<t}\sim q}\Big[\mathrm{KL}\big(q_t(\cdot\mid x_{<t})\|p_t(\cdot\mid x_{<t})\big)\Big].
\end{align*}
Substituting this back into the sum and applying the bound from Eq.~\eqref{eq:on_policy_bound}, we obtain:
\[
\mathrm{KL}(q\|p) = \sum_{t=1}^T \mathbb{E}_{x_{<t}\sim q}[\mathrm{KL}_t] \le \sum_{t=1}^T \varepsilon = T\varepsilon.
\]
\end{proof}
\vspace{-0.5em}
\noindent\textit{Remark.} This result highlights that controlling the \textit{one-step} error $\varepsilon$ on the \textit{generator's own induced distribution} (on-policy matching) is sufficient to bound the sequence-level drift linearly in $T$. Our Stage III self-rollout training explicitly targets this on-policy minimization.

\subsection{Analysis of backbone visibility}
\label{app:bb_future_anchor}

\begin{proposition}[Future-Anchored Gradients in Bidirectional Backbones]
\label{prop:bb_future_anchor}
Let $s_t(x_{1:T}) = \mathrm{Head}(H_t)$ be the discriminator logit for frame $t$, where $H_t$ is the backbone representation.
\begin{enumerate}
    \item \textbf{Causal backbone:} If the backbone is causal, $H_t$ depends only on $x_{\le t}$. Thus, $\frac{\partial s_t}{\partial x_{>t}} = 0$.
    \item \textbf{Bidirectional backbone:} If the backbone is bidirectional, $H_t$ depends on $x_{1:T}$. Thus, in general, $\frac{\partial s_t}{\partial x_{>t}} \neq 0$.
\end{enumerate}
\end{proposition}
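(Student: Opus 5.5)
The plan is to model the discriminator backbone as a composition of $K$ attention layers acting on frame tokens, $H^{(0)}_t = E(x_t)$ and $H^{(k)}_t = F_k\bigl(H^{(k-1)}_t, \{H^{(k-1)}_j\}_{j\in\mathcal{N}_k(t)}\bigr)$, with $H_t = H^{(K)}_t$. Here $\mathcal{N}_k(t)$ is the attention visibility set of frame $t$. Under the causal (block-causal) mask it is $\{j\le t\}$; under the bidirectional mask it is $\{1,\dots,T\}$. The head acts on $H_t$ alone, so $s_t$ inherits exactly the dependency structure of $H_t$. Both parts then reduce to statements about which frames $H_t$ can depend on.

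\textbf{Part 1.} I will argue by induction on $k$ that $H^{(k)}_t$ is a function of $x_{\le t}$ only.
\begin{itemize}
\item The base case $k=0$ is immediate, since the patch embedding is frame-local.
\item For the inductive step, $H^{(k)}_t$ depends only on $H^{(k-1)}_j$ with $j\le t$. By hypothesis each of these depends only on $x_{\le j}\subseteq x_{\le t}$.
\item Feed-forward and normalization layers are token-local, so they preserve the property.
\item Cross-attention to the text condition $c$ does not involve $x$.
\end{itemize}
Hence $s_t = \mathrm{Head}(H_t)$ is constant in $x_{>t}$. For differentiable layers this gives $\partial s_t/\partial x_{>t}=0$ identically; by the chain rule, every path from $x_{>t}$ to $s_t$ passes through a zero attention-mask entry.

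\textbf{Part 2.} The claim is only ``in general nonzero'', so it suffices to show the Jacobian is not identically zero as a function of parameters and inputs. I would do this by one explicit witness and then a genericity argument.

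For the witness, take a single bidirectional softmax attention layer followed by a linear head. Write $H_t=\sum_j a_{tj}V x_j$ with $a_{tj}=\mathrm{softmax}_j(q_t^\top k_j)$, and compute $\partial s_t/\partial x_T$ for $T>t$. The direct value path contributes $a_{tT}\,w^\top V$. This is nonzero whenever $w^\top V\neq 0$, because softmax weights are strictly positive. The additional softmax-derivative terms can be removed by choosing $W_K=0$, so that the weights are uniform, $a_{tj}=1/T$, and the derivative equals exactly $w^\top V/T\neq0$.

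For genericity, note that the Jacobian entries are real-analytic in the parameters, since softmax, linear maps, and smooth activations are analytic. An analytic function that is nonzero at one point vanishes only on a measure-zero set. So $\partial s_t/\partial x_{>t}\neq0$ for almost every parameter setting. The remaining layers of a deep backbone can be handled by taking residual connections with the other blocks set near identity, which keeps the witness valid.

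The main obstacle is making ``in general'' precise. Part 1 is a clean structural fact, but Part 2 cannot hold for all parameters: a zero head, or zero value matrices, kills the derivative. I will therefore state it as holding for all parameters outside a measure-zero (analytic-variety) set, and give the explicit uniform-attention witness to show that this set is proper. A secondary subtlety is that the paper's discriminator produces a video-level logit via learnable query tokens. I would remark that the same induction applies there: with a causal backbone and frame-wise query restriction, the score still ignores $x_{>t}$, whereas bidirectional aggregation reintroduces the future dependence.
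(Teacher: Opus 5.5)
Your proof is correct. Part~1 is the paper's argument: the mask makes $H_t$ a function of $x_{\le t}$, so $s_t$ is constant in $x_{>t}$ and its derivative vanishes. You make it explicit with an induction over layers, using that norms and feed-forwards are token-local.

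Part~2 is where you take a genuinely different route. The paper writes $\partial s_t/\partial x_{>t} = (\partial s_t/\partial H_t)(\partial H_t/\partial x_{>t})$ and asserts the product is nonzero because attention links future tokens to $H_t$. That conveys the structural reason, but the factorization alone does not establish nonvanishing. Either factor, or their product, can vanish, and does for degenerate parameters such as a zero head or zero value matrices; this is why the statement can only say ``in general''. Your uniform-attention witness ($W_K=0$, giving exactly $w^\top V/T$) combined with real-analyticity gives that phrase a precise meaning. The derivative is nonzero for almost every parameter setting and, by Fubini using joint analyticity in parameters and inputs, for almost every input.

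The price is an assumption the paper never needs: analytic components. Softmax, GELU, and LayerNorm/RMSNorm with $\epsilon>0$ are fine, but ReLU would need a separate argument. You also need a witness that lies in the actual deep architecture. Rather than setting the other blocks ``near identity'', set their residual output projections exactly to zero, so there is a single exact witness point. With pre-norm blocks, a final norm, and multi-token frames, the witness derivative picks up normalization and patch-embedding Jacobians and an average over the frame's tokens, but it stays nonzero for suitable choices.

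One caveat, shared with the paper's proof: under the block-causal mask used in the ablation, the visibility set is $\{j\le t\}$ only if $t$ indexes chunks. Frames inside a chunk attend to each other, so at the frame level $\partial s_t/\partial x_{t+1}$ need not vanish within a chunk.
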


\begin{proof}
    	\textbf{Case (i): causal backbone.} A causal backbone enforces a mask $M_{ij}=0$ for $j>i$. The representation $H_t$ at index $t$ is computed as a function of inputs $x_1, \dots, x_t$ only. Formally, $H_t = f_t(x_{\le t})$. For any suffix variation $x'_{>t} \neq x_{>t}$, we have $H_t(x_{\le t}, x_{>t}) = H_t(x_{\le t}, x'_{>t})$, implying $s_t$ is invariant to future frames. Consequently, gradients cannot propagate from future content violations back to time $t$.

    	\textbf{Case (ii): bidirectional backbone.} A bidirectional backbone allows attention to all tokens. The representation is a function of the full sequence: $H_t = g_t(x_{1:T})$. A perturbation in the future $x_{>t}$ alters $H_t$ via the attention mechanism, changing $s_t$. By the chain rule, $\frac{\partial s_t}{\partial x_{>t}} = \frac{\partial s_t}{\partial H_t} \frac{\partial H_t}{\partial x_{>t}}$, which is non-zero. This mechanism allows the discriminator to act as an "anchor," penalizing step $t$ if it is inconsistent with the (ground-truth) future $x_{>t}$ provided during offline training.
\end{proof}

\noindent\textit{Note on causal backbone with a video-wise head.} In the setting with a causal backbone and a video-wise head, the final score $S = \text{Pool}(\{s_t\}_{t=1}^T)$ depends on all frames. However, the feature extraction $H_t$ remains causal. The future dependency is "late fusion" (gradients flow from $S$ to $H_t$ based on pooling weights, but $H_t$ itself does not contain future features). In contrast, a bidirectional backbone provides "early fusion," enriching $H_t$ with future context directly.

\subsection{Analysis of logit granularity}
\label{app:vm_subsumes_fw}

\begin{proposition}[Video-wise Heads Subsume Frame-wise Heads]
\label{prop:vm_subsumes_fw}
Let the backbone outputs be $H = [H_1, \dots, H_T]$. A frame-wise head queries only $H_t$ to score frame $t$, while a video-wise head queries $H_{1:T}$. The class of functions implementable by video-wise heads strictly includes those implementable by frame-wise heads.
\end{proposition}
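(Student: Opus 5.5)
We prove the inclusion and its strictness separately, after fixing what the two head classes are.

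\textbf{Formalization.} A frame-wise head is a family of maps $h^{\mathrm{fw}}$ applied independently, $s_t = h^{\mathrm{fw}}(H_t)$, with the video score obtained by a fixed aggregation $A$ (e.g.\ mean, or the vector of per-frame logits fed to the loss). The head has one learnable query attending only to tokens of $H_t$. A video-wise head is a map $h^{\mathrm{vw}}(H_{1:T})$, realized by a query cross-attending over all tokens of $H_{1:T}$ and possibly followed by an MLP. Since the paper states the claim at the level of function classes, we argue at that level. Where needed we invoke the standard universal-approximation property of attention-plus-MLP heads, which lets us read ``implementable'' as ``any continuous function of the permitted inputs.''

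\textbf{Inclusion.} Given a frame-wise head $F(H)=A\bigl(h^{\mathrm{fw}}(H_1),\dots,h^{\mathrm{fw}}(H_T)\bigr)$, we take $h^{\mathrm{vw}}:=F$. This is a function of $H_{1:T}$, so it lies in the video-wise class. If one insists on the concrete cross-attention architecture, we construct it explicitly. Take $T$ queries, or one query with positional keys, whose attention logits are $+\infty$ on frame $t$'s tokens and $-\infty$ elsewhere; this is achievable by scaling a query aligned with the frame-$t$ positional embedding. This makes the cross-attention restricted to $H_t$, exactly reproducing the frame-wise pooling. The per-frame head is then applied and aggregated by $A$. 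The paper's note that frame-wise heads are cross-attention ``restricted to individual frame tokens'' makes this a masking argument: a video-wise head can realize that mask, in the limit or exactly with hard masks.

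\textbf{Strictness.} We exhibit a video-wise function no frame-wise head can compute. Take a backbone in the regime the paper stresses, a causal backbone or even the identity, so $H_t=x_t$. Consider the static-detector
\[
S(H)=\textstyle\sum_{t\ge 2}\|H_t-H_{t-1}\|^2 .
\]
Any frame-wise score $A(h(H_1),\dots,h(H_T))$ with $A$ symmetric, e.g.\ mean pooling, is invariant under permutations of frames. It is therefore identical on a static clip $(a,a,b,b)$ and a flickering clip $(a,b,a,b)$, whereas $S$ distinguishes them. For general non-symmetric $A$, the argument does not go through, because $A$ sees a vector of scalars $h(H_t)$. We instead use a function depending on a cross-frame interaction that cannot be factored through per-frame scalars: $S=\mathbb{1}[H_1=H_2]$ is not a function of $(h(H_1),h(H_2))$ when $h$ is a continuous map from a high-dimensional space to $\mathbb{R}$. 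We show this by a dimension/injectivity argument. No continuous $h:\mathbb{R}^d\to\mathbb{R}$ with $d\ge2$ is injective, so pick $a\neq b$ with $h(a)=h(b)$. Then $(a,a)$ and $(a,b)$ receive identical frame-wise scores but different values of $S$. This also matches the copying failure mode discussed in the ablation.

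\textbf{Main obstacle.} The hard step is making ``strictly includes'' non-vacuous given how loosely the head classes are specified. In particular, strictness fails if the aggregation $A$ or the backbone already mixes frames; with a bidirectional backbone, $H_t$ can encode everything. We will therefore state strictness for a fixed backbone map with some non-injectivity, such as a causal one or the identity, and for aggregations $A$ acting on per-frame scalar logits. The non-injectivity argument above is the crux, and we will make that assumption explicit.
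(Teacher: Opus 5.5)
Your inclusion step is the paper's argument. The paper also embeds the frame-wise head in the video-wise head through a block-diagonal attention mask: the query for time $t$ sees only frame-$t$ keys, with shared weights. Only your $T$-query variant works here, since a single query can isolate just one frame.

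Strictness is where you diverge. The paper only asserts that the video-wise head is ``strictly more expressive'' because it \emph{also} admits cross-frame attention patterns. That never exhibits a function outside the frame-wise class, and a richer parameterization need not compute new functions. Your non-injectivity argument supplies the missing witness. A continuous scalar per-frame logit on an input space of dimension at least $2$ takes equal values at some $a\neq b$. Then $(a,a)$ and $(a,b)$ receive identical per-frame logits under any aggregation $A$, while a cross-frame function separates them. This holds even for frame-dependent heads $h_t$, by applying the argument to $h_2$.

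Two small repairs are needed. First, under your own reading of ``implementable'' as ``continuous,'' the equality indicator is not in the video-wise class either. Use the continuous $\|H_1-H_2\|^2$ instead, which takes values $0$ versus $\|a-b\|^2>0$ on the same pair. Second, if ``implementable'' means only ``approximable,'' apply Borsuk--Ulam on a circle of radius $r$. This gives antipodal $a,b$ with $h(a)=h(b)$, so every frame-wise head stays at sup-distance at least $2r^2$ from $\|H_1-H_2\|^2$.

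Finally, your identity/causal-backbone restriction is unnecessary for the statement as written. The proposition concerns functions of $H$ with the $H_t$ as free inputs. Your caveat that a bidirectional backbone can erase the separation at the level of functions of $x$ is correct, and the paper's proof omits it. It is also consistent with the paper's empirical finding that logit granularity matters little for bidirectional backbones.
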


\begin{proof}
Consider a standard attention mechanism $\text{Attn}(Q, K, V)$.
The frame-wise head for frame $t$ computes $y_t^{\text{frame}} = \text{Attn}(Q_t, H_t W_K, H_t W_V)$.
The video-wise head computes $y^{\text{video}} = \text{Attn}(Q_{\text{global}}, H W_K, H W_V)$ with a mixing mask $M$.
We can emulate the frame-wise behavior in the video-wise architecture by constructing a block-diagonal mask $M$ in the video-wise head such that query tokens corresponding to time $t$ can only attend to keys at time $t$ (setting $M_{i,j} = -\infty$ if $tokens(i) \in t, tokens(j) \notin t$).
Under this masking, the softmax normalizes only over single-frame tokens, recovering the exact computation of the frame-wise head (assuming shared weights). Since the video-wise head can instantiate this block-diagonal masking pattern while also allowing cross-frame attention patterns, it is strictly more expressive.
\end{proof}

\section{Additional Quantitative Results}
\label{app:additional_quantitative}

\paragraph{Drift score.}
Following Reward Forcing~\cite{lu2025reward}, we quantify long-horizon visual drift by computing the standard deviation of imaging-quality scores along the temporal horizon. Specifically, we evaluate imaging quality over temporal segments of each long rollout and average the resulting standard deviation across videos. A lower Drift Score indicates more stable visual quality over time.

We provide additional quantitative results on VBench-I2V~\cite{huang2023vbench} to complement the main paper. Beyond the standard 1-NFE, 480p, 5-second setting, we evaluate a 2-NFE variant, 20-second rollouts, and zero-shot 720p generation.

The 2-NFE variant is included as an inference-budget reference. It uses the same three-stage training pipeline as AAD-1, including ODE initialization, DMD warmup, and asymmetric adversarial refinement. For adversarial stabilization, we follow Self Forcing~\cite{huang2025self} and add timestep-dependent Gaussian noise to the discriminator inputs. For generated rollouts corresponding to a given generator output timestep, the discriminator noise level is sampled from the associated timestep interval, keeping the noised discriminator inputs consistent with the generator's output distribution. As shown in Table~\ref{tab:additional_vbench_i2v}, the slightly larger sampling budget improves motion smoothness and dynamic degree while maintaining strong I2V subject and background faithfulness.

The 20-second and 720p settings are evaluated in a zero-shot manner from the standard AAD-1 model, without additional training on longer videos or higher-resolution data. These results help illustrate how different inference settings affect temporal consistency, motion dynamics, visual quality, and image-to-video condition preservation.

\begin{table*}[!t]
\small
\centering
\setlength{\tabcolsep}{2pt}
\renewcommand{\arraystretch}{1.08}
\caption{
\textbf{Additional quantitative results on VBench-I2V~\cite{huang2023vbench}.}
Wan 2.1 I2V~\cite{wan2025wan}, sampled with 100 NFE, is included as a bidirectional reference.
All AAD-1 variants are evaluated under different inference settings.
}
\begin{tabular}{llcccccccc}
\toprule
\multirow{2}{*}{Method}
& \multirow{2}{*}{Setting}
& \multicolumn{6}{c}{Quality}
& \multicolumn{2}{c}{Condition} \\
\cmidrule(lr){3-8} \cmidrule(lr){9-10}
&
& \makecell{Subject\\Consistency}$\uparrow$
& \makecell{Background\\Consistency}$\uparrow$
& \makecell{Motion\\Smoothness}$\uparrow$
& \makecell{Dynamic\\Degree}$\uparrow$
& \makecell{Aesthetic\\Quality}$\uparrow$
& \makecell{Imaging\\Quality}$\uparrow$
& \makecell{I2V\\Subject}$\uparrow$
& \makecell{I2V\\Background}$\uparrow$ \\
\midrule

\rowcolor{lightgray}
\multicolumn{10}{l}{\textit{Bidirectional reference}} \\
Wan 2.1 I2V & 100 NFE & 93.88 & 94.86 & 98.14 & 51.09 & 64.97 & 70.12 & 96.80 & 98.59 \\

\midrule
\rowcolor{lightgray}
\multicolumn{10}{l}{\textit{AAD-1 variants}} \\
AAD-1 & 480p, 5s, 1 NFE & 94.34 & 95.08 & 98.22 & 41.46 & 60.07 & 71.49 & 98.65 & 97.83 \\
AAD-1 & 480p, 5s, 2 NFE & 94.03 & 95.52 & 98.99 & 50.04 & 59.46 & 71.00 & 98.06 & 98.50 \\
AAD-1 & 480p, 20s, 1 NFE & 84.31 & 89.30 & 98.93 & 60.98 & 55.48 & 68.61 & 97.43 & 97.25 \\
AAD-1 & 720p, 5s, 1 NFE & 94.52 & 95.63 & 98.76 & 24.39 & 61.03 & 72.29 & 98.30 & 98.70 \\

\bottomrule
\end{tabular}
\label{tab:additional_vbench_i2v}
\end{table*}

\section{Training Cost and Memory}
\label{app:compute}
We provide additional details on the training cost and memory footprint of our method. Full training takes approximately 3.5 days on 64 NVIDIA H20 GPUs, including about 0.5 day for Stage I, 1 day for Stage II, and 2 days for Stage III. To reduce memory usage, we employ Ulysses-style context parallelism~\cite{jacobs2023deepspeed} with context parallel size 8 together with PyTorch activation checkpointing. Under the same Stage III setup, namely 64 H20 GPUs, 8 GPUs per node, and Ulysses-style context parallelism with $\mathrm{cp}=8$, the bidirectional discriminator adversarial training reaches a peak total GPU memory usage of approximately 1040 GB and requires about 49 hours of training, while the causal discriminator adversarial training baseline uses approximately 830 GB and requires about 65 hours. The bidirectional discriminator incurs a higher memory cost because it processes the full sequence jointly; however, it can exploit FlashAttention-3~\cite{shah2024flashattention} for efficient full-sequence attention, whereas the causal discriminator relies on FlexAttention to implement causal masking, which results in slower training in practice.

\section{Inference Efficiency}
\label{app:inference_efficiency}
We report latency and throughput on a single H100 GPU following the Self-Forcing protocol. Since runtime depends strongly on model size, we compare 1 NFE and 4 NFE inference at matched parameter scales. As shown in Table~\ref{tab:inference_efficiency}, reducing the sampling budget from 4 NFE to 1 NFE consistently lowers latency and improves throughput within each scale.

\begin{table}[h]
\small
\centering
\caption{\textbf{Inference efficiency.} Latency and throughput are measured on a single H100 GPU.}
\begin{tabular}{lcccc}
\toprule
\multirow{2}{*}{NFE} & \multicolumn{2}{c}{1.3B} & \multicolumn{2}{c}{14B} \\
\cmidrule(lr){2-3} \cmidrule(lr){4-5}
& Latency (s)$\downarrow$ & Throughput (FPS)$\uparrow$ & Latency (s)$\downarrow$ & Throughput (FPS)$\uparrow$ \\
\midrule
1 & 0.289 & 43.37 & 1.134 & 14.33 \\
4 & 0.714 & 17.70 & 2.822 & 5.71 \\
\bottomrule
\end{tabular}
\label{tab:inference_efficiency}
\end{table}